\ifcvprfinal\pagestyle{empty}\fi
\begin{document}

\title{Modeling Point Clouds with Self-Attention and Gumbel Subset Sampling}

\author{Jiancheng Yang\textsuperscript{1,2}\quad\quad\quad\quad Qiang Zhang\textsuperscript{1,2} \quad\quad\quad\quad Bingbing Ni\textsuperscript{1}\thanks{Corresponding author.} \\
{\tt\small \{jekyll4168, zhangqiang2016, nibingbing\}@sjtu.edu.cn}\\
Linguo Li\textsuperscript{1,2} \quad\quad\quad\quad Jinxian Liu\textsuperscript{1,2} \quad\quad\quad\quad Mengdie Zhou\textsuperscript{1,2} \quad\quad\quad\quad Qi Tian\textsuperscript{3}\\ 
{\tt\small \{LLG440982,liujinxian,dandanzmd\}@sjtu.edu.cn},\quad{\tt\small tian.qi1@huawei.com}\\
\textsuperscript{1}Shanghai Jiao Tong University\\
\textsuperscript{2}MoE Key Lab of Artificial Intelligence, AI Institute, Shanghai Jiao Tong University\\
\textsuperscript{3}Huawei Noah’s Ark Lab
}

\maketitle

\ifcvprfinal\thispagestyle{empty}\fi

\begin{abstract}

Geometric deep learning is increasingly important thanks to the popularity of 3D sensors. Inspired by the recent advances in NLP domain, the self-attention transformer is introduced to consume the point clouds. We develop Point Attention Transformers (PATs), using a parameter-efficient Group Shuffle Attention (GSA) to replace the costly Multi-Head Attention. We demonstrate its ability to process size-varying inputs, and prove its permutation equivariance.
Besides, prior work uses heuristics dependence on the input data (\eg, Furthest Point Sampling) to hierarchically select subsets of input points. Thereby, we for the first time propose an end-to-end learnable and task-agnostic sampling operation, named Gumbel Subset Sampling (GSS), to select a representative subset of input points. Equipped with Gumbel-Softmax, it produces a "soft" continuous subset in training phase, and a "hard" discrete subset in test phase. By selecting representative subsets in a hierarchical fashion, the networks learn a stronger representation of the input sets with lower computation cost. Experiments on classification and segmentation benchmarks show the effectiveness and efficiency of our methods.
Furthermore, we propose a novel application, to process event camera stream as point clouds, and achieve a state-of-the-art performance on DVS128 Gesture Dataset.
   
\end{abstract}


\section{Introduction}

\begin{figure}[ht]
\centering
\includegraphics[width=\linewidth]{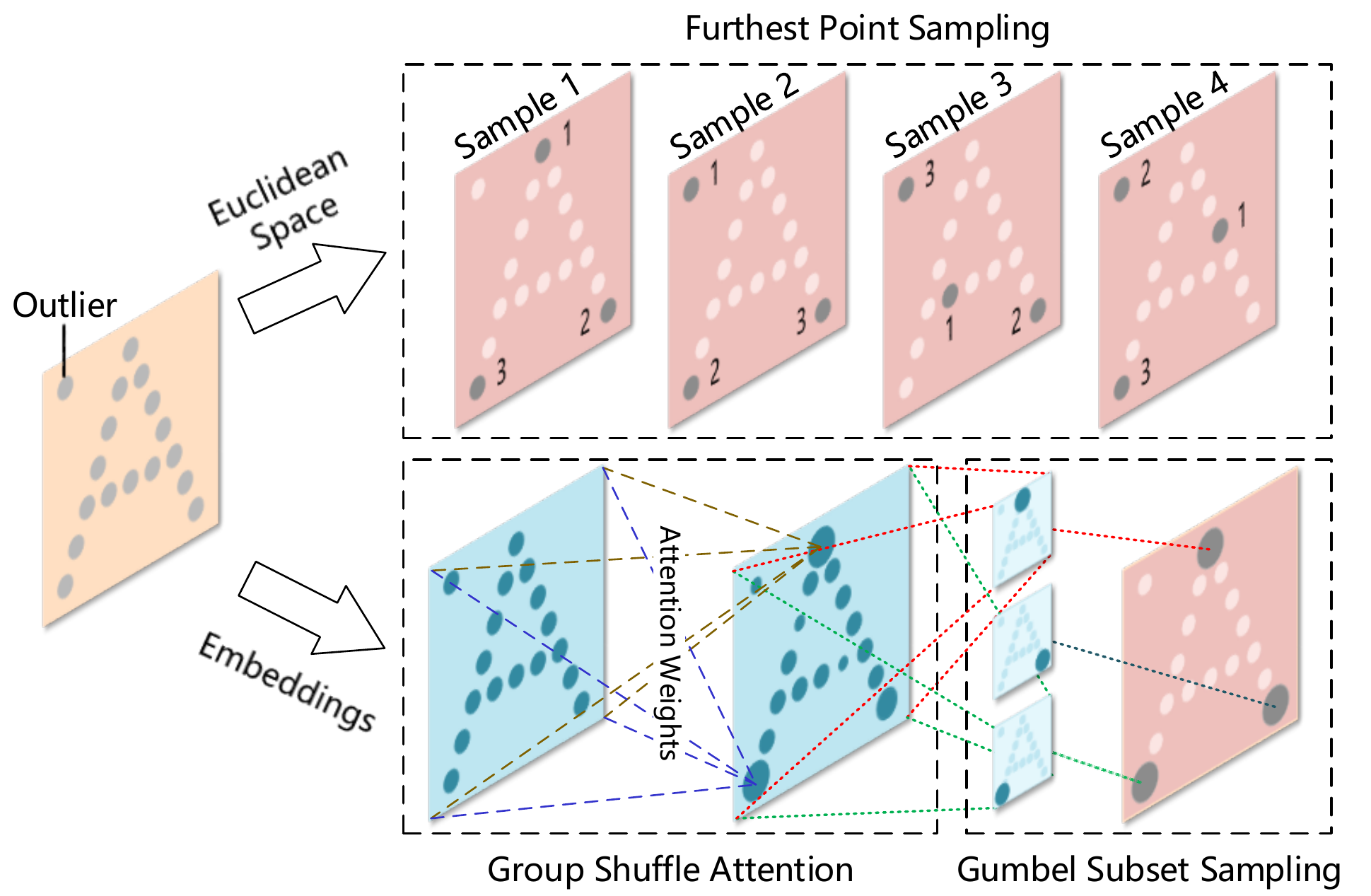}
\caption{\textbf{Illustration of Point Attention Transformers (PATs).} The core operations of PATs are {\em Group Shuffle Attention (GSA)} and {\em Gumbel Subset Sampling (GSS)}. \textit{GSA} is a parameter-efficient self-attention operation on learning relations between points. \textit{GSS} serves as a differentiable alternative to Furthest Point Sampling (FPS) in point cloud reasoning. Several flaws hinder the performance of FPS: it is dependent on the initial sampling point (\ie, permutation-variant), and it samples from low-dimension Euclidean space, making it sensitive to outliers. Instead, \textit{GSS} is permutation-invariant, task-agnostic and differentiable, which enables end-to-end learning on high-dimension representations. It is trained smoothly with Gumbel reparameterization trick, and produces hard and discrete sampling in test phase by annealing.}
\label{fig:motivation}
\end{figure}

We live in a 3D world. Geometric data have raised increasing research concerns thanks to the popularity of 3D sensors, \eg, LiDAR and RGB-D cameras. In particular, we are interested in analyzing 3D point clouds with end-to-end deep learning, which theoretically requires the neural networks to consume 1) size-varying and 2) permutation-invariant sets. PointNets \cite{qi2016pointnet} and DeepSets \cite{zaheer2017deep} pioneer directly processing the point sets. Several studies push this research direction by proposing either structural \cite{klokov2017escape, pointnetplusplus, wang2018dynamic} or componential \cite{pointcnn, shen2018mining} improvements.

We argue the relations between points are critical to represent a point cloud: a single point is non-informative without other points in the same set; in other words, it is simply represented by relations between other points. Inspired by the recent advances in NLP domain \cite{vaswani2017attention, devlin2018bert}, we introduce {\em Point Attention Transformers (PATs)}, based on self-attention to model the relations with powerful multi-head design \cite{vaswani2017attention}. Combining with ideas of the light-weight but high-performance model, we propose a parameter-efficient {\em Group Shuffle Attention (GSA)} to replace the costly Multi-Head Attention \cite{vaswani2017attention} with superior performance.

Besides, prior studies \cite{pointnetplusplus,pointcnn} demonstrate the effectiveness of hierarchical structures in point cloud reasoning. By sampling central subsets of input points and grouping them with graph-based operations at multiple levels, the hierarchical structures mimic receptive fields in CNNs with bottom-up representation learning. Despite great success, we however figure out that the sampling operation is a bottleneck of the hierarchical structures.

Few prior works study sampling from high-dimension embeddings. The most popular sampling operation on 3D point clouds is the Furthest Point Sampling (FPS). However, it is task-dependent, \ie, designed for low-dimension Euclidean space exclusively, without sufficiently utilizing the semantically high-level representations. Moreover, as illustrated in Figure \ref{fig:motivation}, FPS is permutation-variant, and sensitive to outliers in point clouds. 

To this end, we propose a task-agnostic and permutation-invariant sampling operation, named {\em Gumbel Subset Sampling (GSS)}, to address the set sampling problem. Importantly, it is end-to-end trainable. To our knowledge, we are the first study to propose a differentiable subset sampling method. Equipped with Gumbel-Softmax \cite{jang2016categorical, maddison2016concrete}, our \textit{GSS} samples soft virtual points in training phase, and produces hard selection in test phase via annealing. With \textit{GSS}, our PAT classification models are better-performed with lower computation cost.


\section{Preliminaries}

\subsection{Deep Learning on Point Clouds}
CNNs (especially 3D CNNs \cite{qi2016volumetric, zhao20183d}) dominate early-stage researches of deep learning on 3D vision, where the point clouds are rendered into 2D multi-view images \cite{su2015multi} or 3D voxels \cite{qi2016volumetric}. These methods require compute-intensively pre-rendering the sparse points into voluminous representations with quantization artifacts \cite{qi2016pointnet}. To improve memory efficiency and running speed, several researchers \cite{wang2017cnn, su2018splatnet} introduce sparse CNNs on specific data structures.

On the other hand, deep learning directly on the Euclidean-space point clouds raises research attention. By design, these networks should be able to process 1) size-varying and 2) permutation-invariant (or permutation-equivariant) point sets (called {\em Theoretical Conditions} for simplicity). PointNet \cite{qi2016pointnet} and DeepSet \cite{zaheer2017deep} pioneer this direction, where a symmetric function (\eg, shared FC before max-pooling) is used for learning each point's high-level representation before aggregation; However, relations between the points are not sufficiently captured in this way. To this end, PointNet++ \cite{pointnetplusplus} introduces a hierarchical structure based on Euclidean-space nearest-neighbor graph, Kd-Net \cite{klokov2017escape} designs spatial KD-trees for efficient information aggregation, and DGCNN \cite{wang2018dynamic} develops a graph neural network (GNN) approach with dynamic graph construction. Not all studies satisfy both {\em Theoretical Conditions} at the same time; For instance, Kd-Net \cite{klokov2017escape} resamples the input points to evade the "size-varying" condition, and PointCNN \cite{pointcnn} groups and processes the points via specific operators without "permutation-invariant" condition. 

In a word, the follow-up researches on point clouds are mainly 1) voxel-based CNNs \cite{wang2017cnn, cohen2018spherical}, 2) variants of geometric deep learning \cite{bronstein2017geometric}, \ie, graph neural networks on point sets \cite{xie2018attentional, shen2018mining, pointcnn, li2018so}, or 3) hybrid \cite{zhou2017voxelnet, le2018pointgrid, su2018splatnet}. There is also research on the adversarial security on point clouds \cite{yang2019adversarial}

\subsection{Self-Attention}
An attention mechanism \cite{bahdanau2014neural, xu2015show, gehring2017convolutional, DBLP:conf/ijcai/YanNY17, DBLP:conf/mm/YanNY17} uses input-dependent weights to linearly combine the inputs. Mathematically, given an input $X \in \mathbb{R}^{N \times c}$, a query $Q \in \mathbb{R}^{N_Q \times c}$ to attend to the input $X$, the output of the attention layer is
\begin{equation}
    Attn(Q,X) = A\cdot X=S(Q,X)\cdot X,
\end{equation}
where $S: \mathbb{R}^{N_Q \times c}\times\mathbb{R}^{N\times c} \rightarrow \mathbb{R}^{N_Q \times N}$ is a matrix function for producing the attention weights $A$. The common choices of $S$ function are additive, dot-product and general attention \cite{luong2015effective}. 
A self-attention is simply to let the inputs attend to every input element themselves, \ie, $Q=X$. As the attention layer is a single linear combination once the attention weights are produced, we call this form the {\em vanilla self-attention}.

An attention transformer \cite{vaswani2017attention} is a (fully) attentional model with {\em state-of-the-art} performance on neural machine translation and other NLP tasks. Importantly, it introduces a {\em Multi-Head Attention (\textit{MHA})} to aggregate the inputs multiple times with different linear transformations. For a self-attention version \footnote{Note the projection weights for $K, Q ,V$ of Multi-Head Attention are shared in our derivation.},
\begin{multline}
    \mathit{MHA}(X) =\\
    \mathop{concat}\{Attn(X_h,X_h)\,|\,X_h=XW_h\}_{h=1,...,H},
\end{multline}
where $H$ is the number of heads, and $W_h$ is the projection weights of head $h$.
Position-wise MLPs with non-linearity are connected to the attention layers. Equipped with different attention weights, \textit{MHA} introduces stronger capacity in a single layer than the vanilla self-attention. 

\subsection{Discrete Reparameterization}
Variational Auto-Encoders (VAEs) \cite{kingma2013auto} introduce an elegant reparameterization trick to enable continuous stochastic variables to back-propagate in neural network computation graphs. However, discrete stochastic variables are non-trivial to be reparameterized. To this regard, several stochastic gradient estimation methods are proposed, \eg, REINFORCE-based methods \cite{sutton2000policy, schulman2015gradient} and Straight-Through Estimators \cite{bengio2013estimating}. 

For a categorical distribution $Cat(s_1,...,s_M)$, where $M$ denotes the number of categories, $s_i$ ($s.t. \;\sum_{i=1}^{M}(s_i)=1$) means the probability score of category $i$, a {\bf Gumbel-Softmax} \cite{jang2016categorical, maddison2016concrete} is designed as a discrete reparameterization trick, to estimate smooth gradient with a continuous relaxation for the categorical variable. Given i.i.d Gumbel noise $\mathbf{g}=(g_1,...,g_M)$ drawn from $Gumbel(0,1)$ distribution, a soft categorical sample can be drawn (or computed) by
\begin{equation} \label{eq-gumbel-softmax}
  \mathbf{y}= \mathit{softmax}((log(\mathbf{s})+\mathbf{g})/\tau),\quad s=(s_1,...,s_M).
\end{equation}
The Eq. \ref{eq-gumbel-softmax} is referred as \textit{gumbel\_softmax} operation on $\mathbf{s}$.

Parameter $\tau>0$ is the annealing temperature, as $\tau \rightarrow 0^{+}$, $y$ degenerates into the Gumbel-Max form,
\begin{equation} \label{eq-gumbel-argmax}
  \mathbf{\hat{y}}= one\_hot\_encoding(\arg\max((log(\mathbf{s})+\mathbf{g}))),
\end{equation}
which is an unbiased sample from $Cat(s_1,...,s_M)$.

In this way, we are able to draw differentiable samples (Eq. \ref{eq-gumbel-softmax}) from the distribution $Cat(s_1,...,s_M)$ in training phase. In practice, $\tau$ starts at a high value (\eg, 1.0), and anneals to a small value (\eg, 0.1). Optimization on the Gumbel Softmax distribution could be interpreted as solving a certain entropy-regularized linear program on the probability simplex \cite{mena2018learning}. In test phase, discrete samples can be drawn with Gumbel-Max trick (Eq. \ref{eq-gumbel-argmax}).

\newtheorem{proposition}{Proposition}
\counterwithin*{proposition}{section}
\section{Point Attention Transformers}

\begin{figure*}[ht]
\centering
\includegraphics[width=\linewidth]{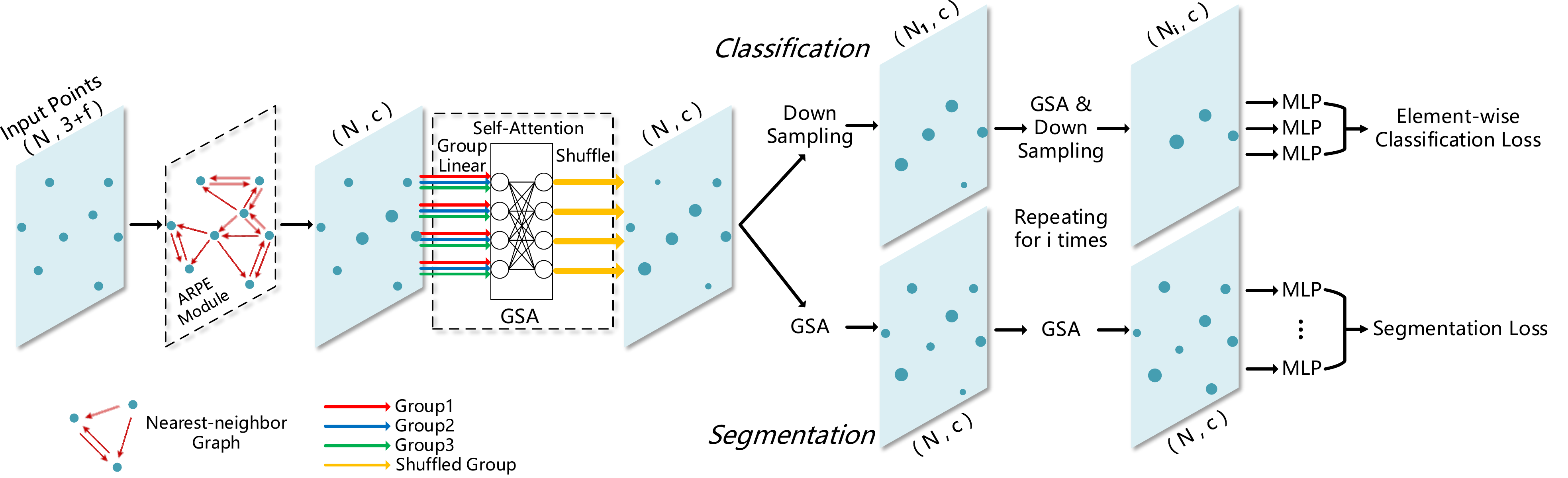}
\caption{\textbf{Point Attention Transformer architecture} for classification (top branch) and segmentation (bottom branch). The input points are first embedded into high-level representations through an {\em Absolute and Relative Position Embedding (ARPE)} module, resulting in some points representative (bigger in the figure). In classification, the features alternately pass through {\em Group Shuffle Attention (GSA)} blocks and down-sampling blocks, either Furthest Point Sampling (FPS), or our {\em Gumbel Subset Sampling (GSS)}. In segmentation, only \textit{GSA} layers are used. Finally, a shared MLP is connected to every point, followed by an element-wise classification loss or segmentation loss for training.}
\label{fig:PAT_overview}
\end{figure*}

\subsection{Overview} \label{sec:pat-overview}

We describe our model in a top-down approach. As illustrated in Figure \ref{fig:PAT_overview}, we define $N$ as the number of points, and $f$ as feature dimension except for the $xyz$-dimension, \eg, $f=3$ for RGB point clouds. An input 3D point cloud $P \in \mathbb{R}^{N\times (3+f)}$, is first embedded into higher-level representations by an {\em Absolute and Relative Position Embedding (ARPE)} module (Section \ref{sec:arpe}), in which each point is represented by its nearest neighbors' relative positions, together with its own absolute position. We then use {\em Group Shuffle Attention (GSA)} (Section \ref{sec:gsa}) blocks for mining relations between elements in the feature set $X \in \mathbb{R}^{N\times c}$, and the representation of each element becomes semantically stronger via the layer-by-layer attentional transformation.

For classification, we define $m$ as the number of target classes, the final output $y_{cls} \in \mathbb{R}^{m}$ assigns a single label to the input by $\arg\max  y_{cls}$. Inspired by several prior studies with hierarchical structures \cite{pointnetplusplus, pointcnn}, we also adopt this down-sampling structure (Section \ref{sec:down-sample}). After every \textit{GSA} operation, we sample a subset (\ie, down-sampling) for subsequent processing. The sampling operation could be either Furthest Point Sampling (FPS) or the proposed Gumbel Subset Sampling (\textit{GSS}) in Section \ref{sec:gss}. The remaining points after the last down-sampling are separately connected to shared MLPs before global average pooling for classification output. For training, a cross-entropy loss is computed over every MLP before averaging, referred as {\em Element-wise Loss} trick (Section \ref{sec:elem-loss}).

For segmentation, the output is $y_{seg} \in \mathbb{R}^{N\times m}$, which assigns a label to every point. As \textit{GSA} operation adaptively aggregates global information to every local point, the down-sampling structure is not necessary, which introduces information loss for segmentation. In this way, a segmentation PAT is simply a stack of \textit{GSA} layers connected to the \textit{ARPE} module, followed by a shared MLP on each point for pointwise segmentation.

We describe the sub-modules in the following sections.

\subsection{Absolute and Relative Position Embedding} \label{sec:arpe}

We first consider how to represent a point cloud. For a single point $x_p$, we argue that its absolute position is informative, while not rich enough; it is also represented by all the remaining points' relative positions (to $x_p$) in the same point cloud. Combine both, and we call it an {\em Absolute and Relative Position Embedding (ARPE)} module.

Given an input point cloud $X=\{x_1, x_2,...,x_p,...,x_N\}$, for a point $x_p$, its {\em position set} is defined as,
\begin{equation} \label{eq:positon-set}
    X'_p=\{(x_p,x_i-x_p)\,|\, i \neq p\}.
\end{equation}
A shared PointNet \cite{qi2016pointnet} is applied on the position set for each point, \ie,
\begin{equation} \label{eq:arpe-for-one}
    \mathit{ARPE}(x_p)=\gamma \circ \max \{h(x')\,|\,x' \in X'_p\},
\end{equation}
where $\gamma$ and $h$ are both MLPs with group normalization $\mathcal{GN}$ \cite{wu2018group} and ELU activation \cite{clevert2015fast}. Note $\mathit{ARPE}$ on all points is easy to parallelize.

With $O(N^2)$ complexity, it is too costly to use the position set $X'_p$ with all points in Eq. \ref{eq:arpe-for-one}. Instead, only top $K$ nearest neighbors are considered ("Nearest-neighbor Graph" in Figure \ref{fig:PAT_overview}). However, sparsity and number of input points are coupled; in other words, top 32 neighbors in 256 points and those in 1,024 points are very different on the scale. To make the \textit{ARPE} module more robust with various point numbers, we introduce a dilated sampling technique \cite{pointcnn}, \ie, the position set is constructed by sampling $K$ points from the top $\lfloor K\times d\rfloor $ neighbors, where dilated rate $d=d_0\cdot\frac{N}{N_0}$, and $d_0 $ is a base dilated rate on $N_0$ points. If not specified, $K=32$ and $d_0=2$ for $N_0=1,024$ points.

\subsection{Group Shuffle Attention} \label{sec:gsa}
We propose to use attention layers to capture the relations between the points. \textit{MHA} is successful in modeling relations by introducing a critical multi-head design \cite{devlin2018bert, velickovic2017graph, parmar2018image}, however we argue that it is voluminous for modeling point clouds. To this regard, we propose a parameter-efficient {\em Group Shuffle Attention (GSA)} to replace \textit{MHA}. There are two improvements over \textit{MHA}: 

Firstly, to get rid of position-wise MLPs, we integrate the non-linearity $\sigma$ into attention modules, named {\em non-linear self-attention},

\begin{equation} \label{eq:non-linear-attn}
    Attn_\sigma(Q,X) = S(Q,X)\cdot \sigma(X),
\end{equation}
where we use a Scaled Dot-Product attention \cite{vaswani2017attention} for $S$, \ie, $S(Q,X)=\mathit{softmax}(QX^T / \sqrt{c})$, and ELU activation \cite{clevert2015fast} for $\sigma$. In other words, we use the "pre-activation" to attend to the "post-activation".

Secondly, we introduce compact group linear transformations \cite{xie2017aggregated, chollet2017xception} with channel shuffle \cite{DBLP:journals/corr/ZhangZLS17, zhang2017interleaved}, keeping the multi-head design. Let $g$ be the number of groups, $c_g = c/g$, $s.t. \; c\mod g =0$, we split $X$ by channels into $g$ groups: $\{X^{(i)} \in \mathbb{R}^{N \times c_g}\}$, and define $W_i\in \mathbb{R}^{c_g \times c_g}$ as a learnable transformation weight for group $i$, thus a Group Attention (\textit{GroupAttn}) is defined,
\begin{multline}\label{eq:group-attn}
    \mathit{GroupAttn}(X) = \\
    \mathop{concat}\{Attn_\sigma(X_i,X_i) \,|\,X_i=X^{(i)}W_i\}_{i=1,...,g}.
\end{multline}
However, a pure stack of \textit{GroupAttn} blocks the information flow between groups. To enable efficient layer-by-layer transformations, we introduce a parameter-free channel shuffle \cite{DBLP:journals/corr/ZhangZLS17} operator $\psi$, see Figure \ref{fig:gsa-gss} (a) for illustration.

For an element $x \in \mathbb{R}^{c}$, we rewrite $x$ as,
\begin{equation}
\begin{split}
    x &= \{x_1,x_2,...,x_{c_g},x_{c_g+1},x_{c_g+2},...,x_c\} \\
    &=\{(x_{ic_g+j} \,|\, j=1,..,c_g) \,|\, i=0,...,g-1\},
\end{split}    
\end{equation}
where $(x_{ic_g+j} \,|\, j=1,..,c_g)$ is the $(i+1)_{th}$ group of channels. In this way, we define the channel shuffle $\psi$ as,
\begin{equation}
\begin{split}
    \psi(x) &= \{x_1,x_{c_g+1}...,x_{(g-1)c_g+1},x_2,x_{c_g+2},...,x_c\} \\
    &=\{(x_{ic_g+j} \,|\, i=0,...,g-1) \,|\, j=1,..,c_g\}.
\end{split} 
\end{equation}
For any modern deep learning framework, channel shuffle can be elegantly implemented by "reshape - transpose - flatten" end-to-end.

A Group Shuffle Attention (\textit{GSA}) is simply a Group Attention followed by the channel shuffle, together with residual connection \cite{he2016deep} and the group normalization $\mathcal{GN}$ \cite{wu2018group},
\begin{equation} \label{eq:gsa}
    \mathit{GSA}(X) = \mathcal{GN}(\psi(\mathit{GroupAttn}(X)) + X).
\end{equation}

The following proposition theoretically guarantees the permutation-equivariance of \textit{GSA}. 

\begin{proposition}
    The Group Shuffle Attention operation is permutation-equivariant, \ie, given input $X \in \mathbb{R}^{N\times c}$, $\forall$ permutation matrix $P$ of size $N$,
    \begin{equation*}
        \mathit{GSA}(P\cdot X) = P\cdot \mathit{GSA}(X).
    \end{equation*}
\end{proposition}
Proof is provided in Appendix \ref{sec:proof1}.

\begin{figure}[ht]
\centering
\includegraphics[width=\linewidth]{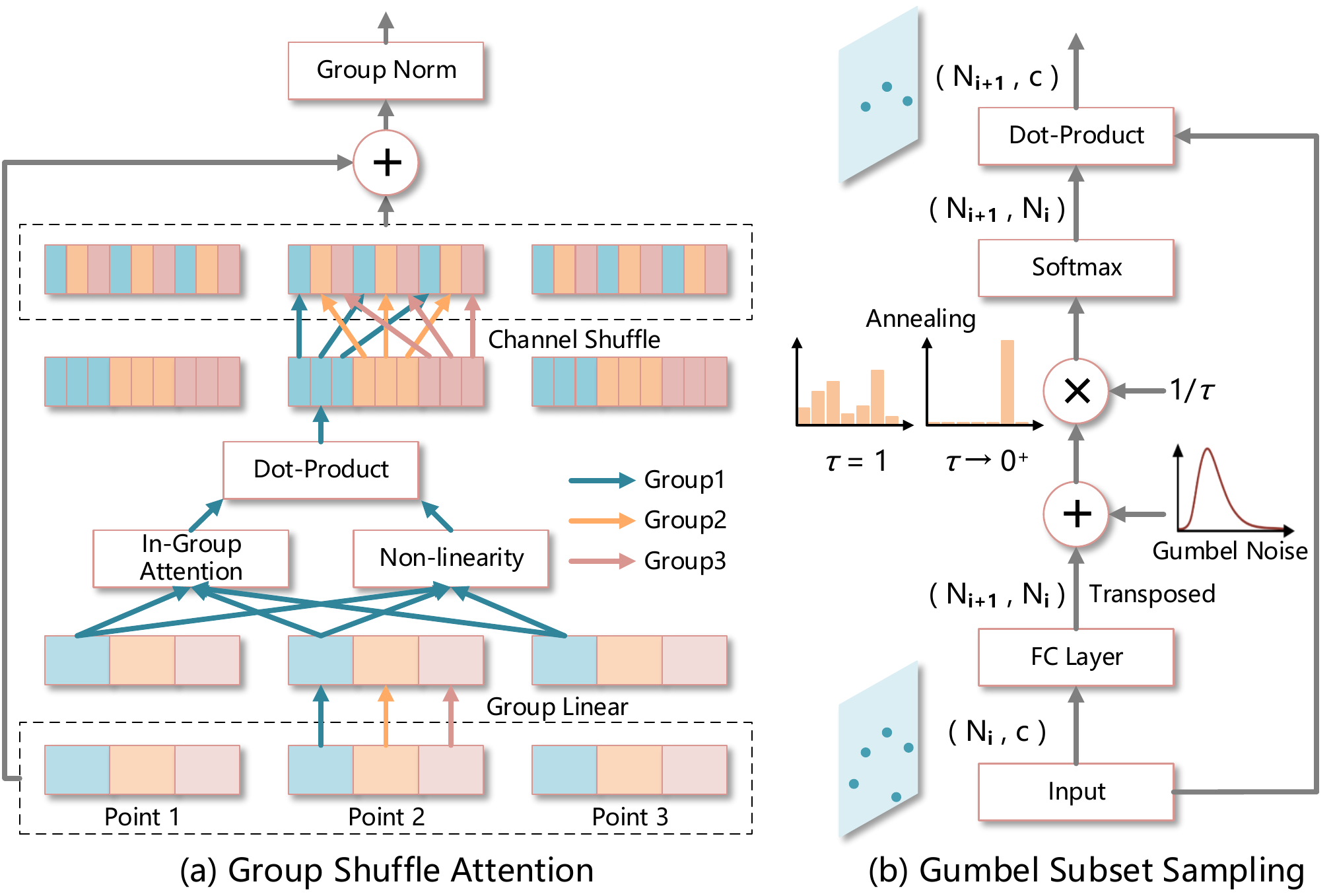}
\caption{\textbf{(a) Group Shuffle Attention.} The core representation learning block in PATs. An input of 3 points is demonstrated. The feature of each point is divided into several groups (different colors in the figure). Within each group, the input points are transformed by a shared linear layer and a non-linear self-attention (Eq. \ref{eq:non-linear-attn}). We then apply channel shuffle on each point feature. Residual connection and group normalization are used for better optimization property. \textbf{(b) Gumbel Subset Sampling} is used for end-to-end learning a representative subset of the input set. For $N_{i+1}$ rounds, one point is selected from the $N_i$ input points competitively. During each round, every input point produces one selection score (Eq. \ref{eq:gumbel-select-one}), with a max score to be selected. A Gumbel-Softmax (Eq. \ref{eq-gumbel-softmax}) with high temperature is used for soft selection in training phase. At inference, a Gumbel-Max (Eq. \ref{eq-gumbel-argmax}) is applied with annealing. Best viewed in color.}
\label{fig:gsa-gss}
\end{figure}

\subsection{Gumbel Subset Sampling} \label{sec:gss}

Although Furthest Point Sampling (FPS) is widely used in point cloud reasoning, it has several defects: 1) its sampling result is dependent on the initial point, \ie, it is not permutation-invariant; 2) it is task-dependent (designed for low-dimension Euclidean space); 3) it is sensitive to outliers. To overcome these issues, we argue that an ideal sampling operation should be:
\begin{itemize}
    \item {\em Permutation-invariant}: the selected subset is always consistent regardless of any permutation of inputs;
    \item {\em Sampling from a high-dimension embedding space}: the sampling operation should be designed {\em task-agnostic} and {\em less sensitive to outliers} by learning representative and robust embeddings; 
    \item {\em Differentiable}: it enables the sampling operation to integrate into neural networks painlessly.
\end{itemize}

For these purposes, we develop a permutation-invariant, task-agnostic and differentiable {\em Gumbel Subset Sampling (GSS)}. Given an input set $X_i \in \mathbb{R}^{N_i\times c}$, which could be output of a neural network layer, the goal is to select a representative subset $X_{i+1} \in \mathbb{R}^{N_{i+1}\times c} \subseteq X_{i}$ with differentiable operations. Inspired by Attention-based MIL pooling \cite{ilse2018attention}, where the pooling output $y$ is an average value weighted by normalized scores produced element-wisely, \ie,
\begin{equation} \label{eq:attn-mil}
    y = \mathit{softmax}(wX_i^T)\cdot X_i, \quad w\in \mathbb{R}^c.
\end{equation}
Note $w\in \mathbb{R}^c$ is a learnable weight and could be replaced with an MLP.

We {\em reinterpret} Attention-based MIL pooling (Eq. \ref{eq:attn-mil}) as competitively selecting one soft virtual point. Though differentiable, the virtual point is however untraceable and less interpretable, especially when selecting multiple points. Instead, we use a hard and discrete selection with an end-to-end trainable \textit{gumbel\_softmax} (Eq. \ref{eq-gumbel-softmax}):
\begin{equation} \label{eq:gumbel-select-one}
    y_{gumbel} = \mathit{gumbel\_softmax}(wX_i^T)\cdot X_i, \quad w\in \mathbb{R}^c.
\end{equation}
in training phase, it provides smooth gradients using discrete reparameterization trick. With annealing, it degenerates to a hard selection in test phase.

A Gumbel Subset Sampling (\textit{GSS}) is simply a multiple-point version of Eq. \ref{eq:gumbel-select-one}, which means a distribution of subsets,
\begin{equation} \label{eq:gss}
    \mathit{GSS}(X_i) = \mathit{gumbel\_softmax}(WX_i^T)\cdot X_i, \quad W\in \mathbb{R}^{N_{i+1}\times c}.
\end{equation}

The following proposition theoretically guarantees the permutation-invariance of \textit{GSS}.
\begin{proposition}
    The Gumbel Subset Sampling operation is permutation-invariant, \ie, given input $X \in \mathbb{R}^{N\times c}$, $\forall$ permutation matrix $P$ of size $N$,
    \begin{equation*}
        \mathit{GSS}(P\cdot X) = \mathit{GSS}(X).
    \end{equation*}
\end{proposition}
Proof is provided in Appendix \ref{sec:proof2}.

\subsection{Other Architecture Design}

\paragraph{Down-sampling Structure} \label{sec:down-sample}
In our classification models, we down-sample input points at 3 levels (from 1,024 points to 384 - 128 - 64 points). Although \textit{GSS} is theoretically superior to FPS, the Gumbel noises also serve as a (too) strong regularization.
Instead of using \textit{GSS} in all down-sampling, we find that replacing the first down-sampling with FPS performs slightly better in our experiments.

\paragraph{Element-wise Loss} \label{sec:elem-loss}
We compute the classification loss as segmentation \cite{pointcnn}: a shared MLP is connected to each remaining point to output the same target class, where the MLP is a stack of "FC - $\mathcal{GN}$ - ELU - dropout \cite{srivastava2014dropout}". The final loss is averaged by element-wise cross entropy. The element-wise loss trick does not bring any performance boost, while the training is significantly faster to converge. At inference, the final classification score is averaged by the element-wise outputs. 





\section{Applications}
In this section, we first demonstrate the effectiveness and efficiency of PATs on a benchmark of point cloud classification, ModelNet40 dataset \cite{wu20153d} of CAD models. We then explore the model performance on real-world datasets. We report the segmentation results on S3DIS dataset \cite{armeni20163d}. Furthermore, we propose a novel application on recognizing gestures with event camera on DVS128 Gesture Dataset \cite{amir2017low}. To our knowledge, this is the first study to process event-camera stream as spatio-temporal point clouds, with {\em state-of-the-art} performance.

\subsection{ModelNet40 Shape Classification} \label{sec:modelnet-results}
\paragraph{Dataset} 
We evaluate our classification model on ModelNet40 \cite{wu20153d} dataset of 40-category CAD models. Official split with 9,840 samples for training, and 2,468 samples for test is used in our experiments. We use the same preprocessed dataset as PointNet++ \cite{pointnetplusplus}.

\begin{table}
\centering

\begin{tabular}{lcc}
\toprule
Method          & Points    & Accuracy (\%)  \\ \midrule
DeepSets \cite{zaheer2017deep}          & 5,000     & 90.0    \\
PointNet   \cite{qi2016pointnet}        & 1,024     & 89.2    \\ 
Kd-Net \cite{klokov2017escape}      & 1,024     & 90.6   \\ 
PointNet++ \cite{pointnetplusplus}      & 1,024     & 90.7    \\ 
KCNet \cite{shen2018mining}      & 1,024     & 91.0    \\ 
DGCNN \cite{wang2018dynamic} & 1,024 & {\bf 92.2} \\
PointCNN \cite{pointcnn}      & 1,024     & {\bf 92.2}   \\ \midrule
PAT (\textit{GSA only})      & 1,024     & 91.3       \\ 
PAT (\textit{GSA only})       & 256       & 90.9      \\ 
PAT (FPS)             & 1,024     & 91.4  \\ 
PAT (FPS + \textit{GSS})         & 1,024     & 91.7   \\ 
\bottomrule
\end{tabular}
\caption{Classification performance on ModelNet40 dataset. }
\label{table:modelnet40}
\end{table}

\paragraph{Experiment Setting} \label{sec:modelnet40-setting}
Classification PATs use \textit{ARPE} to produce 1,024-dimension embeddings, subsequently fed into 3 \textit{GSAs} with hidden size 1,024, followed by a shared MLP with 1,024 - 512 - 256 hidden sizes for 40-category element-wise cross entropy loss (Section \ref{sec:elem-loss}). Several variants of PATs are considered in our experiments: "PAT (\textit{GSA only})" uses no down-sampling; "PAT (FPS)" uses FPS down-sampling after each \textit{GSA}, with a FPS(384) - FPS(128) - FPS(64) down-sampling structure; and "PAT (FPS + \textit{GSS})" uses a down-sampling structure \textit{GSS} except for the first one, \ie FPS(384) - \textit{GSS}(128) - \textit{GSS}(64). 

We train the neural networks using Adam optimizer \cite{kingma2014adam}, with a batch size of 64 and initial learning rate of 0.001. We halve the learning rate every 15 epochs, and 150 epochs are enough for convergence. 

\paragraph{Performance and Model Complexity} 

Classification performance on the test set is summarized in Table \ref{table:modelnet40} with recent {\em state-of-the-art}. Our PATs (including all variants) achieve comparable result on ModelNet40. Interestingly, the PAT using only 256 points (to train and test) outperforms the models before PointNet++ \cite{pointnetplusplus} using 1,024 points.

We also evaluate the model complexity in terms of model size and forward time in Table \ref{table:modelnet40-complexity}. The forward time is recorded with a batch size of 8 on a single GTX 1080 GPU, which is the same hardware environment of the comparison models. As illustrated, our models achieve competitive performance with great parameter-efficiency and acceptable speed. Due to the insufficient support of group linear layers in PyTorch (0.4.1) \cite{paszke2017automatic}, there still exists improvements in speed with low-level implemental optimization. Note the PATs with down-sampling achieve better performance with even lower computation cost, and \textit{GSS} improves FPS further with a neglectable burden.

\begin{table}
\centering

\begin{tabular}{lccc}
\toprule
Method          & Size  & Time & Accuracy (\%)\\ \midrule

PointNet   \cite{qi2016pointnet}        & 40     & {\bf 25.3}& 89.2    \\ 
 
PointNet++ \cite{pointnetplusplus}      & 12     & 163.2 & 90.7    \\ 

DGCNN \cite{wang2018dynamic} & 21 & 94.6 &{\bf 92.2} \\ \midrule

PAT (\textit{GSA only})      & {\bf 5}  & 132.9   & 91.3  \\ 
PAT (FPS)             & {\bf 5}    & 87.6  & 91.4 \\ 
PAT (FPS + \textit{GSS})         & 5.8 & 88.6 & 91.7 \\ 
\bottomrule
\end{tabular}
\caption{Model size ("Size", MB), forward time ("Time", ms) and Accuracy on ModelNet40 dataset.}
\label{table:modelnet40-complexity}
\end{table}

\subsection{S3DIS Indoor Scene Segmentation} \label{sec:s3dis}
\paragraph{Dataset}
We evaluate our PAT segmentation models on real-word point cloud semantic segmentation dataset, Stanford Large-Scale 3D Indoor Spaces Dataset (S3DIS) \cite{armeni20163d}. This dataset contains 3D RGB point clouds of 6 indoor areas totally including 272 rooms. Each point belongs to one of 13 semantic categories (\eg, ceiling, floor, clutter). 

\paragraph{Experiment Setting}
We follow the same setting as prior study \cite{pointcnn}, where each room is split into blocks of area 1.5\textit{m} $\times$ 1.5\textit{m}, and each point is represented as a 6D vector (XYZ, RGB). 2,048 points are sampled for each block during training process, and all points are used for testing block-wisely. We use a 6-fold cross validation over the 6 areas, with 5 areas for training and 1 area left for validation each time. As there are overlaps between areas except for Area 5 \cite{SPGraph}, we report the metrics on Area 5 separately. 

Segmentation PATs use \textit{ARPE} modules to produce 1,024-dimension embeddings, followed by 5 1,024-dimension \textit{GSAs}. No down-sampling is used. A shared MLP with the same structure as that in our classification PATs (Section \ref{sec:modelnet40-setting}) is used for 13-category segmentation. Adam optimizer \cite{kingma2014adam} is used for training cross-entropy loss with a batch size of 16. The learning rate is initialized at 0.0001, then halved every 5 epochs. The training is converged within 20 epochs.



\paragraph{Performance}
Evaluation performance on all-area cross validation (AREAS) and Area 5 is reported in Table \ref{table:S3DIS}. Our segmentation PAT achieves a best trade-off between segmentation performance and parameter-efficiency. On Area 5, it outperforms all the comparison models; on AREAS, our method achieves a superior performance over all comparison models except for PointCNN \cite{pointcnn} in terms of mIoU, with a significantly smaller model size.

To further analyze the performance between PointCNN and our method, we compare per-class IoU and mean per-class accuracy (mAcc) on AREAS and Area 5. As depicted in Table \ref{table:pointcnn-pat}, on AREAS, our method outperforms PointCNN in terms of mAcc; on Area 5, our method outperforms PointCNN in terms of both mIoU and mAcc, plus superior per-class IoUs on majority of classes.

\begin{table}
\centering
\begin{tabular}{lcccc}
\toprule
Method  & mIoU  & mIoU on Area 5   & Size (MB) \\ \midrule
RSNet \cite{RSNet}  & 56.47  & -        & -            \\
SPGraph \cite{SPGraph} & 62.1    & 58.04 & -                 \\
PointNet \cite{qi2016pointnet}          & 47.71    & 47.6     & {\bf 4.7}          \\ 
DGCNN \cite{wang2018dynamic}   & 56.1    & -   & 6.9                 \\ 
PointCNN \cite{pointcnn}   & {\bf 65.39}    & 57.26     & 46.2           \\ \midrule
PAT  & 64.28        & {\bf 60.07}    & 6.1               \\ 
\bottomrule
\end{tabular}
\caption{3D semantic segmentation results on S3DIS. Mean per-class IoU (mIoU, \%) is used as evaluation metric. Model sizes are obtained using the official codes.}
\label{table:S3DIS}
\end{table}

\begin{table*}[]
\centering
{\fontsize{9pt}{9pt}\selectfont
\setlength{\tabcolsep}{1.0mm}{
\begin{tabular}{c|c|c|c|c|c|c|c|c|c|c|c|c|c|c|c|c}
\hline

\multicolumn{2}{c|}{Class}    & ceiling & floor & wall  & beam  & colum & window & door  & table & chair & sofa  & bookcase & board & clutter & mIoU  & mAcc  \\ \hline

\multirow{2}{*}{AREAS} & PointCNN & 94.78   & 97.30 & 75.82 & 63.25 & 51.71  & 58.38  & 57.18 & 71.63 & 69.12 & 39.08 & 61.15    & 52.19 & 58.59   & 65.39 & 75.61  \\ 
                           & PAT    & 93.01   & \textbf{98.36} & 73.54 & 58.51 & 38.87  & \textbf{77.41}  & \textbf{67.74} & 62.70 & 67.30 & 30.63 & 59.60    & \textbf{66.61} & 41.39 & 64.28   & \textbf{76.45} \\ \hline

\multirow{2}{*}{Area 5}     & PointCNN & 92.31   & 98.24 & 79.41 & 0.00     & 17.6   & 22.77  & 62.09 & 74.39 & 80.59 & 31.67 & 66.67    & 62.05 & 56.74  & 57.26 & 63.86  \\ 
                           & PAT   & \textbf{93.04}  & \textbf{98.51} & 72.28 & \textbf{1.00}  & \textbf{41.52}  & \textbf{85.05}  & 38.22 & 57.66 & \textbf{83.64} & \textbf{48.12} & \textbf{67.00}    & 61.28 & 33.64  & \textbf{60.07} & \textbf{70.83}  \\ \hline

\end{tabular}}
}
\caption{Comparison between PointCNN \cite{pointcnn} and our PAT on all-area cross validation (AREAS) and Area 5 on S3DIS dataset, in terms of per-class IoU (\%), mean per-class IoU (mIoU, \%), and mean per-class accuracy (mAcc, \%)}
\label{table:pointcnn-pat}
\end{table*}

\subsection{Event Camera Stream as Point Clouds: DVS128 Gesture Recognization}

\paragraph{Motivation and Dataset}
Point cloud approaches are primarily designed for 3D spatial sensors, \eg, LiDAR and Matterport 3D Cameras. However, there are numbers of potential applications with point-based records. In this section, we explore a novel application on event camera with point cloud approaches.

Dynamic Vision Sensor (DVS) \cite{lichtsteiner2008128} is a biologically inspired event camera, which "transmits data only when a pixel detects a change" \cite{amir2017low}. On the 128$\times$128 sensor matrix, it records whether there is a change (by a user-defined threshold) on the corresponding position in microseconds. In particular, we explore gesture recognition on DVS128 Gesture Dataset \cite{amir2017low}, with 11 classes of gestures (10 named gestures, \eg, "Arm Roll", and 1 "others") collected from 122 users. Training data is collected from 94 users with 939 event streams, and test data is collected from 24 users with 249 event streams. The gesture action records is a sequence of points, each of which is {\em point change} represented as a 4-dimension vector: abscissa $x$, ordinate $y$, timestamp $t$, and polarity (1 for appear and 0 for disappear).  In this way, we regard the event stream as {\em spatio-temporal point clouds}. See Figure \ref{fig:dvs128} for illustration.

\begin{figure}[t]
\centering
\includegraphics[width=\linewidth]{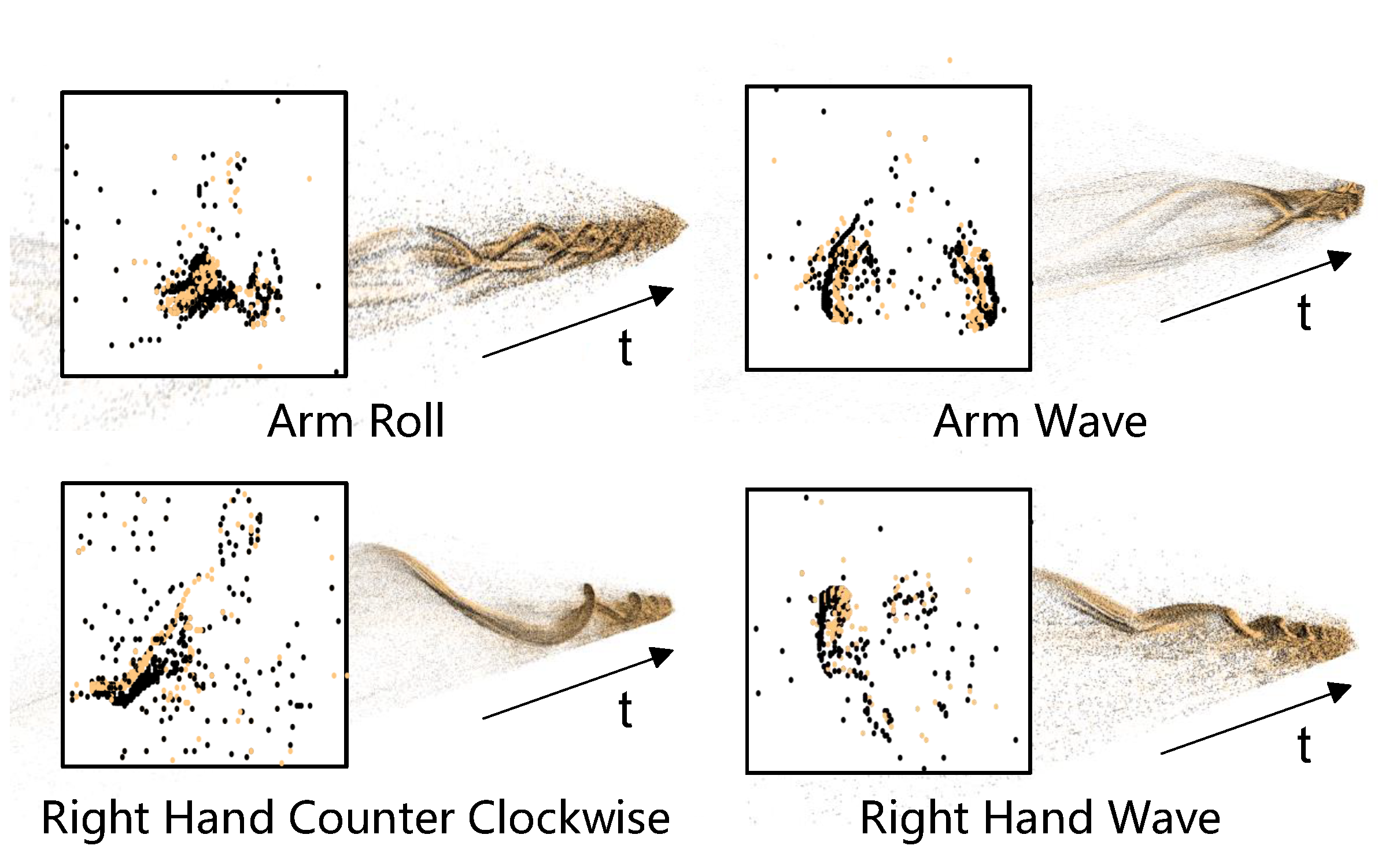}
\caption{\textbf{Visualization of DVS128 Gesture Dataset}. 4 event streams are displayed as spatio-temporal point clouds. The patch on the left of each stream is a snapshot of one timestep. Points are colored in black for polarity 1 or yellow for polarity 0.}
 
\label{fig:dvs128}
\end{figure}

\paragraph{Experiment Setting}
We use a sliding window approach to get training and test samples. Each sample is within a {\em window length} of 750ms, sliding with {\em step size} of 100ms, see Figure \ref{fig:preprocess} for demonstration. After this preprocessing, there are 61,544 clips for training and 10,256 clips for test. The step size could also be regarded as the maximum decision latency for a realtime application. For a stream containing $m$ clips, its {\em system-level} prediction is the mode of all predictions labels. The System Accuracy based on the system-level prediction is used for evaluation \cite{amir2017low}.

  
  
The same-structured classification PATs as ModelNet40 (Section \ref{sec:modelnet40-setting}) are used in this experiment. An SGD optimizer with a constant learning rate of 0.001 is used to train the PATs within 60 epochs for convergence.  
 
  
\begin{figure}[ht]
\centering
\includegraphics[width=\linewidth]{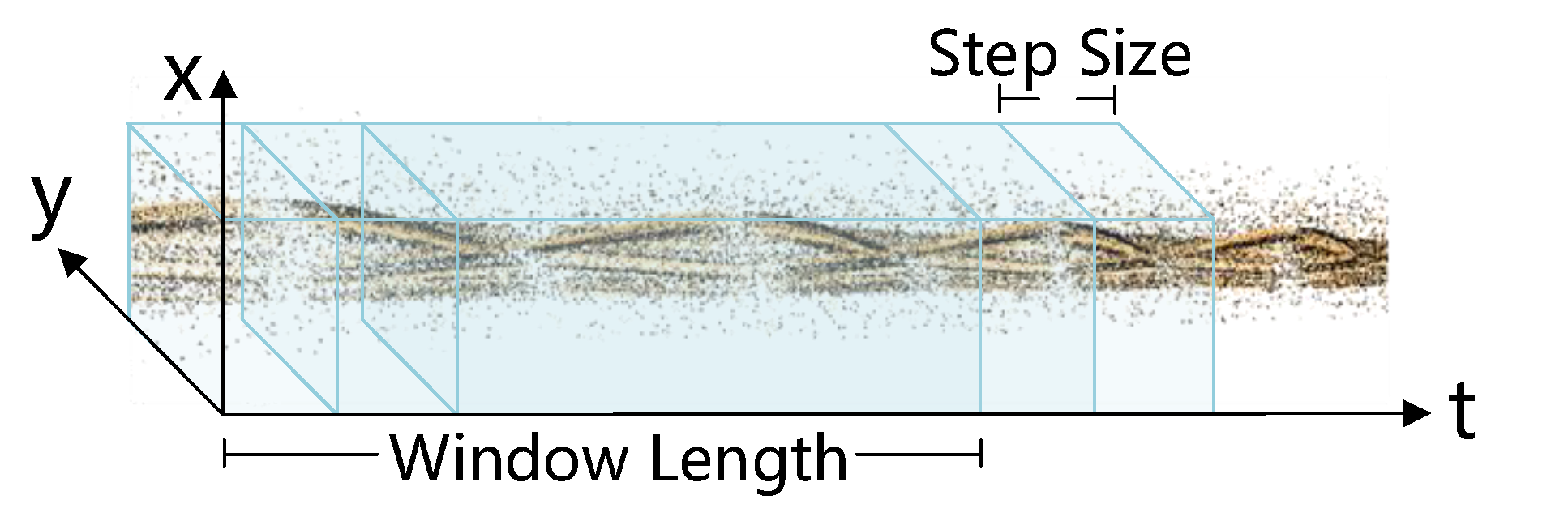}
\caption{Window length and step size in event camera stream.}
\label{fig:preprocess}
\end{figure}
  
  
\paragraph{Performance} There are few studies using point cloud approaches on event camera stream. For fair comparison, we implement a vanilla PointNet \cite{qi2016pointnet} and a PointNet++ \cite{pointnetplusplus} on this experiment. If not specified, 1,024 points sampled from the clips are used for training and evaluation.

As depicted in Table \ref{table:dvsresults}, all point cloud approaches are running within the maximum decision latency (100ms). We achieve a {\em state-of-the-art} on this dataset, with strong parameter-efficiency. Interestingly, PAT (\textit{GSA only}) with 256 points performs similarly to that with 1,024 points. We argue that it is because of outliers and sparsity of the events. Note the baseline CNN \cite{amir2017low} is running on a low-power processor with a maximum decision latency of 105ms. Our results indicate the potential of replacing CNNs with PATs, with general network quantization techniques \cite{NIPS2016_6573}.

\begin{table} 
\centering
{\fontsize{10pt}{10pt}\selectfont
\setlength{\tabcolsep}{1.0mm}{
\begin{tabular}{lccccc}
\hline
Model & 10-\textsc{class}  & 11-\textsc{class}  & Time& Size \\ \hline
CNN \cite{amir2017low}          &  96.5 &  94.4 & -          & -\\ \hline 
PointNet  \cite{qi2016pointnet}   & 89.5 & 88.8 & 2.8          & 6.5    \\
PointNet++ \cite{pointnetplusplus}  & 95.6 & 95.2 & 18.2         & 12         \\ \hline
PAT  (\textit{GSA only})   &  96.9 &  95.6 & 16.9    & 5          \\
PAT (\textit{GSA only}, N256)      & 96.9 &95.6 & 7.5    & 5          \\
PAT (FPS)      &  96.5 & 95.2 & 12.7         & 5         \\
\text{PAT (FPS + \textit{GSS})}    & {\bf 97.4} & {\bf 96.0} & 13.1    & 5.8        \\ \hline

\end{tabular}}
}
\caption{DVS128 Gesture Dataset System Accuracy in 10 classes ("10-\textsc{class}") and 11 classes ("11-\textsc{class}"). "N256" means using 256 points to train and test. Forward time ("Time") with a batch size of 1 on a single GTX 1080 GPU is measured in ms, and model size ("Size") is measured in MB.}
\label{table:dvsresults}
\end{table}

\section{Ablation Study}

In this section, we analyze the effectiveness of the components / tricks on ModelNet40 dataset. As \textit{GSS} has been proven effective in Section \ref{sec:modelnet-results}, we analyze the components on PATs without down-sampling. All experiments in the ablation study are conducted using 256 points.
 
\begin{figure}[ht]
\centering
\includegraphics[width=\linewidth]{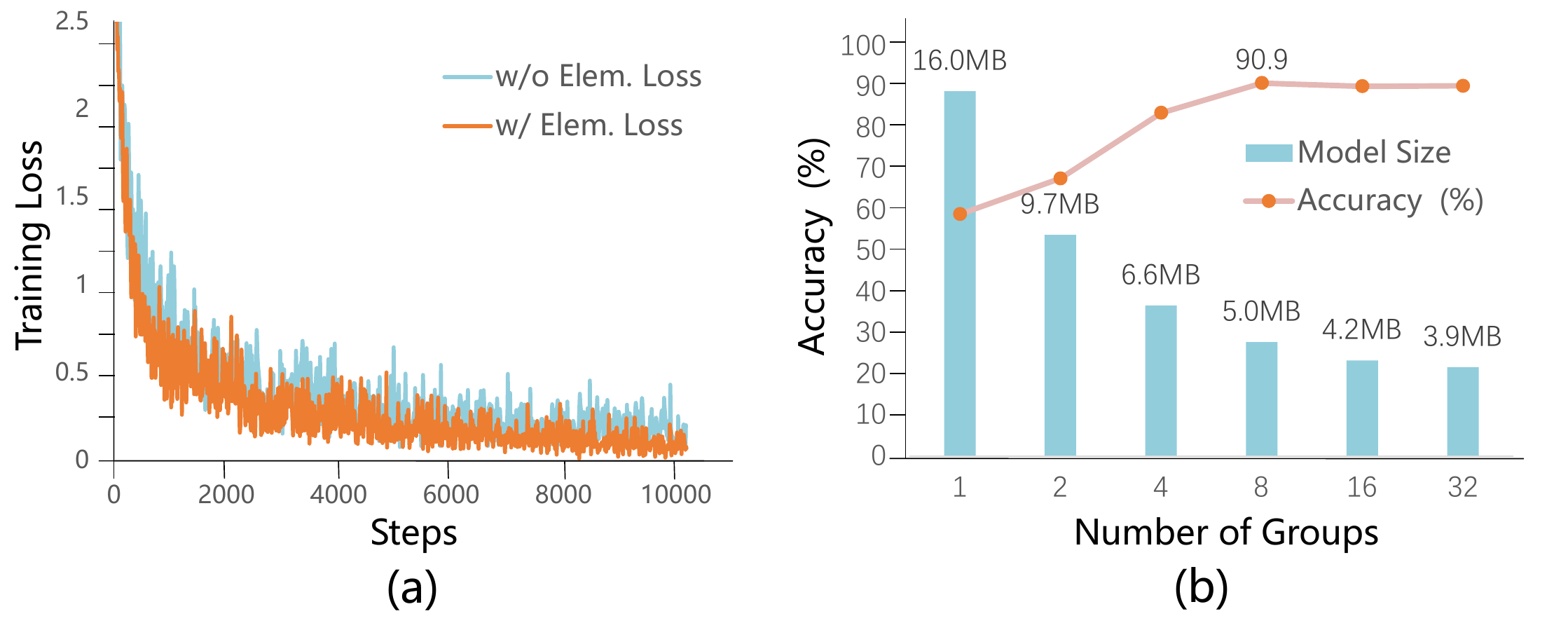}
\caption{\textbf{(a) ModelNet40 training loss} with (w/) or without (w/o) element-wise loss. \textbf{(b) ModelNet40 accuracy (with 256 points) and model size} by varying numbers of groups.}
\label{fig:loss_acc_size}
\end{figure}

{\bf Element-wise Loss.} As depicted in Figure \ref{fig:loss_acc_size} (a), training tends to be faster with element-wise loss. However, there is no performance boost for evaluation on test set. 

{\bf Number of Groups.} As shown in Figure \ref{fig:loss_acc_size} (b), grouping is critical to the performance since it is coupled with the multi-head design in attention. Without grouping ($group=1$), the model accuracy drops significantly, with even larger model size. With 8 groups, it achieves a best trade-off between accuracy and model size.

{\bf Channel Shuffle vs. no Shuffle.} To enable information to flow across groups, channel shuffle is critical to \textit{GSA} (CS "On" or "Off" in Table \ref{table:component-accuracy}), which is parameter-free and introduces neglectable computation cost.

{\bf Embedding Layer.} \textit{ARPE} module is shown to be very effective to boost performance ("MLP" or "\textit{ARPE}" in Table \ref{table:component-accuracy}). It provides an improvement of approximately 0.8\% consistently on 256 or 1,024 points.

{\bf \textit{GSA} vs. \textit{MHA}.} We design 2 \textit{MHA} counterparts to compare with: 1) \textit{MHA LG}, with the same hidden size as \textit{GSA}, and 2) \textit{MHA SM}, by tuning the hidden size to keep a comparable model size as \textit{GSA}. As depicted in Table \ref{table:component-accuracy}, our PATs with \textit{GSA} show superior performance in terms of both parameter efficiency and accuracy.

{\bf Group Norm vs. Layer Norm.}
We also discuss layer normalization $\mathcal{LN}$ \cite{ba2016layer} in the original Multi-head Attention \cite{vaswani2017attention}. As $\mathcal{GN}$ is proposed to be an extension to $\mathcal{LN}$, in our experiments ($\mathcal{GN}$ or $\mathcal{LN}$ in Table \ref{table:component-accuracy}), the former ($group=8$) outperforms the latter consistently.

\begin{table}
\centering

\begin{tabular}{ccccc}
\toprule
Embedding & Attention & CS  & Norm          & Accuracy  \\ \midrule
\textit{ARPE}  & \textit{GSA}& On       &$\mathcal{GN}$        & 90.9\\
\textit{ARPE}  & \textit{GSA}& Off       &$\mathcal{GN}$        & 88.8 \\
MLP  & \textit{GSA}& On       &$\mathcal{GN}$        & 90.1 \\
\textit{ARPE}  & \textit{MHA SM}& On       &$\mathcal{GN}$        & 89.3\\
\textit{ARPE}  & \textit{MHA LG}& On       &$\mathcal{GN}$        & 89.9\\
\textit{ARPE}  & \textit{GSA}& On       &$\mathcal{LN}$        & 89.9\\
MLP  & \textit{GSA}& On       &$\mathcal{LN}$        & 90.0\\

\bottomrule
\end{tabular}
\caption{Effectiveness of components in PATs. CS denotes channel shuffle. Accuracy is obtained using 256 points on ModelNet40.}
\label{table:component-accuracy}
\end{table}
 
\section{Conclusion}
We develop {\em Point Attention Transformers (PATs)} on point cloud reasoning. A parameter-efficient {\em Group Shuffle Attention (GSA)} is proposed to learn the relations between points. Besides, we design an end-to-end learnable and task-agnostic sampling operation, named {\em Gumbel Subset Sampling (GSS)}. Results on several benchmarks demonstrate the effectiveness and efficiency of our methods.
In the future, it is interesting to apply \textit{GSS} on general sets, \eg, to explore both effectiveness and interpretability on hierarchical multiple instance learning.

\noindent\textbf{Acknowledgment}
\small 
This work was supported by National Science Foundation of China (U1611461, 61521062). This work was partly supported by STCSM (18DZ1112300, 18DZ2270700). This work was also partially supported by joint research grant of SJTU-BIGO LIVE, joint research grant of SJTU-Minivision, and China's Thousand Talent Program. 
 
\small
\bibliography{ref}
\bibliographystyle{ieeefullname}

\clearpage
\appendix
\section*{Appendix}

\newtheorem{lemma}{Lemma}

\section{Proof of Permutation Equivariance of Group Shuffle Attention} \label{sec:proof1}

\begin{lemma}[Permutation matrix and permutation function]
	$\forall \Lambda \in \mathbb{R}^{N \times N}$, $\forall$ permutation matrix $P$ of size $N$, $\exists p:\{1,2,...,N\} \rightarrow \{1,2,...,N\}$ is a permutation function:
	\begin{equation} \label{eq:permutation-property}
	\Lambda_{ij}=(P\cdot\Lambda)_{p(i)j}=(\Lambda \cdot P^T)_{i p(j)}=(P\cdot\Lambda\cdot P^T)_{p(i)p(j)}.
	\end{equation}
\end{lemma}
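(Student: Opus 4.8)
The plan is to fix the bijection between the permutation matrix $P$ and the permutation function $p$ at the outset, and then verify the three claimed index identities by direct computation, each reducing to the defining property that $P$ carries a single unit entry per row and per column.

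First I would recall the standard fact that a permutation matrix $P$ of size $N$ has exactly one entry equal to $1$ in each row and each column, all others being $0$. This lets me \emph{define} the permutation function $p$ by declaring $P_{ij}=1$ if and only if $i=p(j)$; equivalently $P_{ij}=\delta_{i,p(j)}$, where $\delta$ is the Kronecker delta. Because $P$ is a permutation matrix, the map $p$ is a bijection of $\{1,\dots,N\}$, so $p^{-1}$ exists, and this explicit construction already settles the existence claim. I would also record that $P$ is orthogonal, $P^T=P^{-1}$, and that $P^T_{ij}=P_{ji}=\delta_{j,p(i)}$.

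Next I would establish the two one-sided identities $(P\cdot\Lambda)_{kj}=\Lambda_{p^{-1}(k)\,j}$ and $(\Lambda\cdot P^T)_{ik}=\Lambda_{i\,p^{-1}(k)}$. Each follows by writing the matrix product as a sum over the contracted index and using the indicator $\delta_{i,p(j)}$ to collapse the sum to its single surviving term. Substituting $k=p(i)$ into the first and $k=p(j)$ into the second, and cancelling $p^{-1}\circ p=\mathrm{id}$, yields the first two equalities of Eq. \ref{eq:permutation-property}. The third equality, $(P\cdot\Lambda\cdot P^T)_{p(i)p(j)}=\Lambda_{ij}$, then follows either by applying the same delta-collapse to the double sum $\sum_{m,n}P_{km}\Lambda_{mn}P^T_{nl}$, or simply by composing the two one-sided identities already proven.

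The computation is elementary, so the only genuine care point — and the step I expect to be the main obstacle — is pinning down the correct direction of the correspondence between $P$ and $p$. With the convention $P_{ij}=\delta_{i,p(j)}$, left-multiplication $P\cdot\Lambda$ permutes rows by $p^{-1}$, so $p$ and $p^{-1}$ must be tracked consistently to avoid an off-by-an-inverse error; had I instead defined $p$ through the rows of $P$, the subscripts in the statement would require the opposite convention. Once the bijection is fixed, each of the three equalities is a one-line collapse of a Kronecker-delta sum.
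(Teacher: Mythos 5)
Your proof is correct: defining $p$ by $P_{ij}=\delta_{i,p(j)}$, collapsing the Kronecker-delta sums to get $(P\Lambda)_{kj}=\Lambda_{p^{-1}(k)j}$ and $(\Lambda P^T)_{ik}=\Lambda_{i\,p^{-1}(k)}$, and then substituting $k=p(i)$ or $k=p(j)$ yields exactly the three claimed identities, and your caution about tracking $p$ versus $p^{-1}$ is the only real subtlety. The paper states this lemma without any proof at all, so your argument simply supplies the standard verification that the authors left implicit; there is no alternative route in the paper to compare against.
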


\begin{lemma}  \label{lemma:softmax}
    Given $A \in \mathbb{R}^{N\times N}$, $\forall$ permutation matrix $P$ of size $N$,
    
    \begin{equation}
     \mathit{softmax}(PAP^T) = P\mathit{softmax}(A)P^T.
    \end{equation}
\end{lemma}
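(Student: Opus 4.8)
The plan is to work entrywise and exploit the fact that the row-wise softmax normalization commutes with a \emph{simultaneous} relabeling of rows and columns. First I would fix the convention that softmax acts row-wise, so that $\mathit{softmax}(A)_{ij} = \exp(A_{ij}) / \sum_{k} \exp(A_{ik})$. By Lemma 1 there is a bijection $p$ with $(PAP^T)_{p(i)p(j)} = A_{ij}$, and likewise $(P\,\mathit{softmax}(A)\,P^T)_{p(i)p(j)} = \mathit{softmax}(A)_{ij}$. Since $p$ is a bijection, every index pair of the $N \times N$ matrix is of the form $(p(i),p(j))$ for some $(i,j)$, so it suffices to verify the claimed equality at these positions.

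Next I would evaluate the left-hand side at position $(p(i),p(j))$. The numerator is $\exp\big((PAP^T)_{p(i)p(j)}\big) = \exp(A_{ij})$ by Lemma 1. For the denominator I would reindex the summation over columns: writing the running column index as $p(l)$, which is valid because $p$ is a bijection and hence $\{p(l)\}_l$ ranges over all columns, I get $\sum_k \exp\big((PAP^T)_{p(i)k}\big) = \sum_l \exp\big((PAP^T)_{p(i)p(l)}\big) = \sum_l \exp(A_{il})$. Combining, $\mathit{softmax}(PAP^T)_{p(i)p(j)} = \exp(A_{ij}) / \sum_l \exp(A_{il}) = \mathit{softmax}(A)_{ij}$, which matches $(P\,\mathit{softmax}(A)\,P^T)_{p(i)p(j)}$ computed above. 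As the positions $(p(i),p(j))$ exhaust the whole matrix, the two matrices are equal.

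The main obstacle is handling the denominator correctly: softmax is not a plain entrywise map but normalizes over each row, so one must confirm that permuting the columns merely reorders the terms of the normalizing sum (leaving its value unchanged) while permuting the rows selects the matching row. Bijectivity of $p$ is exactly what licenses the change of summation variable that makes this precise; everything else is bookkeeping through the identities of Lemma 1.
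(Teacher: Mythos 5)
Your proof is correct and follows essentially the same route as the paper's: verify the identity entrywise at positions $(p(i),p(j))$ using the permutation function from Lemma 1. You are in fact more careful than the paper about the one nontrivial point --- reindexing the normalizing sum in the denominator via the bijectivity of $p$ --- which the paper glosses over (its displayed denominator $\sum_{n=1}^{N} e^{A_{ij}}$ is a typo for $\sum_{n=1}^{N} e^{A_{in}}$), so no changes are needed.
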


\begin{proof}

$\mathit{softmax}(A)_{ij} = \frac{e^{A_{ij}}}{\sum_{n=1}^{N} e^{A_{ij}}}$.
 
Consider a permutation function $p$ for $P$, using Eq. \ref{eq:permutation-property}, we get:
	\begin{align*}
	\begin{split}
	(P \cdot \mathit{softmax}(A) \cdot P^T)_{p(i)p(j)} &= \mathit{softmax}(A)_{ij} \\
	&= \frac{e^{A_{ij}}}{\sum_{n=1}^{N} e^{A_{ij}}} \\
	&= \frac{e^{(PAP^T)_{p(i)p(j)}}}{\sum_{n=1}^{N} e^{(PAP^T)_{p(i)p(j)}}} \\
	&=  \mathit{softmax}(P \cdot A \cdot P^T)_{p(i)p(j)}.
	\end{split}
	\end{align*}
	which implies $P \cdot \mathit{softmax}(A) \cdot P^T = \mathit{softmax}(P \cdot A \cdot P^T)$.
\end{proof}

\begin{lemma}
Non-linear self-attention $Attn_\sigma(Q,X) = S(Q,X)\cdot \sigma(X)$, with $S(Q,X)=\mathit{softmax}(QX^T / \sqrt{c})$ is permutation-equivariant.
\end{lemma}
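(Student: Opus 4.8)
The plan is to reduce everything to Lemma~\ref{lemma:softmax} together with the entrywise nature of $\sigma$. Since this is a self-attention layer we set $Q=X$, so the statement to establish is $Attn_\sigma(PX,PX)=P\cdot Attn_\sigma(X,X)$ for every permutation matrix $P$ of size $N$. First I would simply expand the definition: $Attn_\sigma(PX,PX)=\mathit{softmax}\!\big((PX)(PX)^T/\sqrt{c}\big)\cdot\sigma(PX)$.

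The key observation is that the score matrix conjugates cleanly. Writing $A=XX^T/\sqrt{c}\in\mathbb{R}^{N\times N}$, one has $(PX)(PX)^T/\sqrt{c}=PXX^TP^T/\sqrt{c}=PAP^T$, so the argument of the softmax is exactly a permutation conjugate of $A$. Applying Lemma~\ref{lemma:softmax} then gives $\mathit{softmax}(PAP^T)=P\,\mathit{softmax}(A)\,P^T$, which pulls the leading permutation outside the softmax at the cost of a trailing $P^T$.

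Next I would handle the activation. Because $\sigma$ (ELU) acts entrywise and $P$ only reorders the rows of $X$, permuting rows commutes with $\sigma$, i.e. $\sigma(PX)=P\,\sigma(X)$. Substituting both facts yields $Attn_\sigma(PX,PX)=P\,\mathit{softmax}(A)\,P^TP\,\sigma(X)$, and since $P$ is a permutation matrix we have $P^TP=I$, so the two inner permutations cancel and leave $P\,\mathit{softmax}(A)\,\sigma(X)=P\cdot Attn_\sigma(X,X)$, as desired.

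The calculation is short, so the only real subtlety is the bookkeeping: ensuring that the trailing $P^T$ produced by the softmax lemma meets the leading $P$ coming from $\sigma(PX)$ and annihilates via $P^TP=I$. I expect this cancellation — checking that the permutation structure of the attention weights lines up correctly with that of the value matrix $\sigma(X)$ — to be the one place where sign- or transpose-style errors could creep in; everything else follows immediately from Lemma~\ref{lemma:softmax} and the pointwise definition of $\sigma$.
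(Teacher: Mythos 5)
Your proof is correct and follows essentially the same route as the paper's: expand the definition with $Q=X$, note that the score matrix conjugates as $PAP^T$, apply Lemma~\ref{lemma:softmax} to pull $P$ out of the softmax, use that the entrywise $\sigma$ commutes with row permutation, and cancel $P^TP=I$. No gaps; the bookkeeping you flag as the only delicate step is handled identically in the paper.
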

\begin{proof}
	For a self-attention,  $Q=X$.
	
	$\sigma$ is an element-wise function, thus $\sigma(P\cdot X) = P\cdot \sigma(X)$. 
	\begin{align*}
	\begin{split}
	Attn_\sigma(PX,PX) &= S(PX,PX)\cdot \sigma(PX)\\
	                   &=\mathit{softmax}(P\cdot XX^T\cdot P^T / \sqrt{c})\cdot P \cdot\sigma(X)\\
	                   &=P\cdot \mathit{softmax}(XX^T/ \sqrt{c})\cdot P^T \cdot P \cdot\sigma(X)\\
	                   &=P\cdot \mathit{softmax}(XX^T/ \sqrt{c})\cdot \cdot\sigma(X)\\
	                   &=P\cdot Attn_\sigma(X,X).
	\end{split}
	\end{align*}
	which implies non-linear self-attention is permutation-equivariant.
\end{proof}

\begin{proposition}
    The Group Shuffle Attention operation is permutation-equivariant, \ie, given input $X \in \mathbb{R}^{N\times c}$, $\forall$ permutation matrix $P$ of size $N$,
    \begin{equation*}
        \mathit{GSA}(P\cdot X) = P\cdot \mathit{GSA}(X).
    \end{equation*}
\end{proposition}
\begin{proof}
\begin{equation} \tag{\ref{eq:gsa}}
    \mathit{GSA}(X) = \mathcal{GN}(\psi(\mathit{GroupAttn}(X)) + X),
\end{equation}
where,
\begin{multline} \tag{\ref{eq:group-attn}}
    \mathit{GroupAttn}(X) = \\
    \mathop{concat}\{Attn_\sigma(X_i,X_i) \,|\,X_i=X^{(i)}W_i\}_{i=1,...,g}.
\end{multline}

\textit{GSA} only introduces element-wise operations, which does not change the permutation-equivariance of $Attn_\sigma$.
\end{proof}

\section{Proof of Permutation Invariance of Gumbel Subset Sampling}  \label{sec:proof2}
\begin{proposition}
    The Gumbel Subset Sampling operation is permutation-invariant, \ie, given input $X \in \mathbb{R}^{N\times c}$, $\forall$ permutation matrix $P$ of size $N$,
    \begin{equation*}
        \mathit{GSS}(P\cdot X) = \mathit{GSS}(X).
    \end{equation*}
\end{proposition}
\begin{proof}
According to Eq. \ref{eq:gss},
    \begin{equation*}
        \mathit{GSS}(X) = \mathit{gumbel\_softmax}(WX^T)\cdot X, \quad W\in \mathbb{R}^{N\times c}.
    \end{equation*}
Similar to Lemma \ref{lemma:softmax},
\begin{equation}
    \mathit{softmax}(AP^T) = \mathit{softmax}(A)P^T.
\end{equation}
Since \textit{gumbel\_softmax} add element-wise operations on \textit{softmax}, it does not change the permutation property. In this way, 
\begin{align*}
	\begin{split}
        GSS(P\cdot X)&=\mathit{gumbel\_softmax}(WX^TP^T)\cdot P X\\
        &=\mathit{gumbel\_softmax}(WX^T)P^T\cdot P X\\
        &=\mathit{gumbel\_softmax}(WX^T)\cdot X\\
        &=GSS(X).
	\end{split}
\end{align*}
\end{proof}


\end{document}